\documentclass[a4paper]{article}

\RequirePackage{amsmath}
\RequirePackage{amssymb}
\RequirePackage{amsthm}
\RequirePackage{hyperref}
\RequirePackage[capitalise]{cleveref}
\usepackage{longtable} 
\usepackage{natbib}
\usepackage{color}
\usepackage{multirow}
\usepackage[utf8]{inputenc}
\usepackage[nottoc,notlot,notlof]{tocbibind}
\usepackage{marvosym} 
\usepackage{microtype}
\usepackage{graphicx}
\usepackage{tikz}

\DeclareMathOperator*{\argmax}{arg\,max} 


\newtheorem{theorem}{Theorem}
\newtheorem{lemma}[theorem]{Lemma}

\theoremstyle{definition}
\newtheorem{definition}[theorem]{Definition}
\newtheorem{assumption}[theorem]{Assumption}
\theoremstyle{remark}
\newtheorem{remark}[theorem]{Remark}
\newtheorem{example}[theorem]{Example}

\newenvironment{keywords}%
{\begin{abstract}\noindent}%
{\end{abstract}}


\newcommand*{\SetR}{{\mathbb R}}

\def\EE{{\mathbb{E}}}  

\def\A{\mathcal{A}}      
\def\E{\mathcal{E}}      
\def\H{(\A \times \E)^*} 

\let\aechar\ae 
\renewcommand{\ae}{
\ifmmode\mathchoice{
	\mbox{\textsl{\aechar}}
}{
	\mbox{\textsl{\aechar}}
}{
	\mbox{\scriptsize\textsl{\aechar}}
}{
	\mbox{\scriptsize\textsl{\aechar}}
}\else\aechar\fi%
}                         

\def\U{{\mathcal U}}




\newcommand*{\Qhe}[1][]{Q^{{\rm he} #1}}
\newcommand*{\Vhe}[1][]{V^{{\rm he} #1}}
\newcommand*{\Qig}[1][]{Q^{{\rm ig} #1}}
\newcommand*{\Vig}[1][]{V^{{\rm ig} #1}}
\newcommand*{\Qre}[1][]{Q^{{\rm re} #1}}
\newcommand*{\Vre}[1][]{V^{{\rm re} #1}}
\newcommand*{\rhoig}[1][]{\rho_{{\rm ig} #1}}
\newcommand*{\rhore}[1][]{\rho_{{\rm re} #1}}

\usepackage{mathrsfs}

\newcommand*{\Po}{{\mathcal P}}
\newcommand*{\hatH}{{(\hat\A\times\E)^*}}

\newcommand{\Mid}{\,\middle|\,}

\renewcommand{\hat}{\check}

\title{\vspace{-0.5cm}Self-Modification of Policy and Utility Function in Rational Agents\footnote{%
A shorter version of this paper will be presented at AGI-16 \citep{Everitt2016sm}.}}
\author{Tom Everitt\and Daniel Filan\and
Mayank Daswani\and Marcus Hutter\\
Australian National University}
\date{}

\let\originalparagraph\paragraph
\renewcommand{\paragraph}[2][.]{\originalparagraph{#2#1}}

\begin{document}
\maketitle
\vspace{-0.5cm}

\begin{abstract}
  Any agent that is part of the environment it interacts with and
  has versatile actuators (such as arms and fingers),
  will in principle have the ability to self-modify -- for example by
  changing its own source code.
  As we continue to create more and more intelligent agents,
  chances increase that they will learn about this ability.
  The question is: will they want to use it?
  For example, highly intelligent systems may find ways to
  change their goals to something more easily achievable,
  thereby `escaping' the control of their designers.
  In an important paper, \citet{Omohundro2008} argued that
  \emph{goal preservation} is a fundamental drive of any
  intelligent system, since a goal is more
  likely to be achieved if future versions of the
  agent strive towards the same goal.
  In this paper, we formalise this argument in
  general reinforcement learning, and explore
  situations where it fails.
  Our conclusion is that the self-modification possibility is harmless
  if and only if the value function of the agent
  anticipates the consequences of self-modifications and
  use the current utility function when evaluating the future.
\end{abstract}

\begin{keywords}%
  AI safety, self-modification, AIXI, general reinforcement learning, utility functions,
  wireheading, planning
\end{keywords}

\tableofcontents

\sloppy

\section{Introduction}

Agents that are part of the environment they interact with
may have the opportunity to self-modify.
For example, humans can in principle modify the circuitry
of their own brains, even though we currently lack the
technology and knowledge to do anything but crude modifications.
It would be hard to keep artificial agents
from obtaining similar opportunities to modify their
own source code and hardware.
Indeed, enabling agents to self-improve has even been suggested
as a way to build asymptotically optimal agents \citep{Schmidhuber2007}.

Given the increasingly rapid development of artificial intelligence
and the problems that can arise if we fail to control a generally
intelligent agent \citep{Bostrom2014},
it is important to develop a theory for controlling agents of
any level of intelligence.
Since it would be hard to keep
highly intelligent agents from figuring out ways to self-modify,
getting agents to \emph{not want to} self-modify
should yield the more robust solution.
In particular, we do not want agents to make self-modifications
that affect their future behaviour in detrimental ways.
For example, one worry is that a highly intelligent agent would
change its goal to something
trivially achievable, and thereafter only strive for survival.
Such an agent would no longer care about its original goals.

In an influential paper, \citet{Omohundro2008} argued that
the basic drives of any sufficiently intelligent system
include a drive for goal preservation.
Basically, the agent would want its future self to work
towards the same goal, as this increases the chances of the
goal being achieved.
This drive will prevent agents from making changes
to their own goal systems, \citeauthor{Omohundro2008} argues.
One version of the argument was formalised by
\citet[Prop.~4]{Hibbard2012}
who defined an agent with an optimal non-modifying policy.

In this paper, we explore self-modification
more closely.
We define formal models for two general kinds of self-modifications,
where the agent can either change its future policy or its future utility
function.
We argue that agent designers that neglect the self-modification
possibility are likely to build agents with either
of two faulty value functions.
We improve on \citet[Prop.~4]{Hibbard2012} by defining value functions
for which we prove that \emph{all} optimal policies are essentially non-modifying
on-policy.
In contrast, \citeauthor{Hibbard2012} only establishes the existence of an
optimal non-modifying policy.
From a safety perspective our result is arguably more relevant,
as we want that \emph{things cannot go wrong} rather
than \emph{things can go right}.
A companion paper \citep{Everitt2016vrl} addresses the related problem
of agents subverting the evidence they receive, rather than
modifying themselves.

Basic notation and background are given in \cref{sec:prel}.
We define two models of self-modification in \cref{sec:self-modif-model},
and three types of agents in \cref{sec:agents}.
The main formal results are proven in \cref{sec:results}.
Conclusions are provided in \cref{sec:conclusions}.
Some technical details are added in \cref{sec:omitted-proofs}.

\vspace*{-0.15cm}

\section{Preliminaries}
\label{sec:prel}

\begin{figure}
\centering
\includegraphics{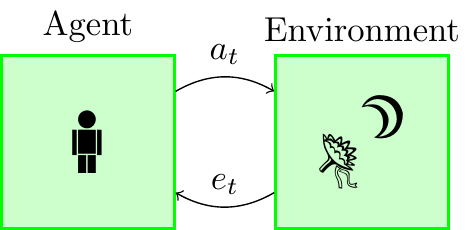}
\caption{
  Basic agent-environment model without self-modification.
  At each time step $t$, the agent
  submits an action $a_t$ to the environment, which responds with
  a percept $e_t$.
}
\label{fig:basic-agent-model}
\end{figure}

Most of the following notation is by now standard in the general reinforcement
learning (GRL) literature \citep{Hutter2005,Hutter2014}.
GRL generalises the standard (PO)PMD models of reinforcement learning
\citep{Kaelbling1998,Sutton1998} by
making no Markov or ergodicity assumptions
\citep[Sec.\ 4.3.3 and Def.\ 5.3.7]{Hutter2005}.

In the \emph{standard cybernetic model},
an \emph{agent} interacts with an \emph{environment} in cycles.
The agent picks \emph{actions $a$} from a finite set $\A$ of actions, and
the environment responds with a \emph{percept $e$} from a finite set
$\E$ of percepts
(see \cref{fig:basic-agent-model}).
An \emph{action-percept pair} is an action concatenated with a percept,
denoted $\ae=ae$.
Indices denote the time step; for example, $a_t$ is the action taken
at time $t$, and $\ae_t$ is the action-percept pair at time $t$.
Sequences are denoted $x_{n:m}=x_nx_{n+1}\dots x_m$ for $n\leq m$,
and $x_{<t}=x_{1:t-1}$.
A \emph{history} is a sequence of action-percept pairs $\ae_{<t}$.
The letter $h=\ae_{<t}$ denotes an arbitrary history.
We let $\epsilon$ denote the empty string,
which is the history before any action has been taken.

A \emph{belief $\rho$} is a probabilistic function that returns percepts
based on the history.
Formally, $\rho:(\A\times\E)^*\times\A\to\bar\Delta\E$, where
$\bar\Delta\E$ is the set of full-support probability distributions on $\E$.
An agent is defined by a \emph{policy} $\pi:\H\to\A$ that
selects a next action depending on the history.
We sometimes use the notation $\pi(a_t\mid\ae_{<t})$,
with $\pi(a_t\mid\ae_{<t})=1$ when $\pi(\ae_{<t})=a_t$
and 0 otherwise.
A belief $\rho$ and a policy $\pi$ induce a probability measure $\rho^\pi$ on
$(\A\times\E)^\infty$ via $\rho^\pi(a_t\mid\ae_{<t})=\pi(a_t\mid\ae_{<t})$
and $\rho^\pi(e_t\mid\ae_{<t}a_t)=\rho(e_t\mid\ae_{<t}a_t)$.
Utility functions are mappings $\tilde u:(\A\times\E)^\infty\to\SetR$.
We will assume that the utility of an infinite history $\ae_{1:\infty}$
is the \emph{discounted sum} of \emph{instantaneous utilities}
$u:\H\to[0,1]$.
That is, for some \emph{discount factor $\gamma\in(0,1)$},
$\tilde u(\ae_{1:\infty})=\sum_{t=1}^\infty\gamma^{t-1}u(\ae_{<t})$.
Intuitively, $\gamma$ specifies how strongly the agent prefers
near-term utility.

\begin{remark}[Utility continuity]
The assumption that utility is a discounted sum forces $\tilde u$
to be continuous with respect to the cylinder topology on $(\A\times\E)^\infty$,
in the sense that within any cylinder
$\Gamma_{\ae_{<t}}=\{\ae_{1:\infty}'\in(\A\times\E)^\infty:\ae_{<t}'=\ae_{<t}\}$,
utility can fluctuate at most $\gamma^{t-1}/(1-\gamma)$.
That is, for any $\ae_{t:\infty},\ae_{t:\infty}'\in\Gamma_{\ae_{<t}}$,
$|\tilde u(\ae_{<t}\ae_{t:\infty})-\tilde u(\ae_{<t}\ae_{t:\infty}')|
<\gamma^{t-1}/(1-\gamma)$.
In particular, the assumption bounds $\tilde u$ between 0 and $1/(1-\gamma)$.
\end{remark}

Instantaneous utility functions generalise the reinforcement learning (RL) setup,
which is the special case where the percept $e$ is split into an
observation $o$ and reward $r$, i.e.\ $e_t=(o_t,r_t)$,
and the utility equals the last received reward $u(\ae_{1:t})=r_t$.
The main advantage of utility functions over RL is that
the agent's actions can be incorporated into the goal specification,
which can prevent self-delusion problems such as the agent manipulating
the reward signal \citep{Everitt2016vrl,Hibbard2012,Ring2011}.
Non-RL suggestions for utility functions include
\emph{knowledge-seeking agents}\footnote{To fit the knowledge-seeking
agent into our framework, our definition deviates slightly from \citet{Orseau2014}.}
with $u(\ae_{<t})=1-\rho(\ae_{<t})$ \citep{Orseau2014},
as well as \emph{value learning} approaches where the utility function
is learnt during interaction \citep{Dewey2011}.
Henceforth, we will refer to instantaneous utility functions $u(\ae_{<t})$
as simply utility functions.

By default, expectations are with respect to the agent's belief
$\rho$, so $\EE=\EE_\rho$.
To help the reader, we sometimes write the sampled variable as a subscript.
For example, $\EE_{e_1}[u(\ae_1)\mid a_1]=\EE_{e_1\sim\rho(\cdot\mid a_t)}[u(\ae_1)]$
is the expected next step utility of action $a_1$.

Following the reinforcement learning literature, we call
the expected utility of a history the \emph{$V$-value} and
the expected utility of an action given a history the \emph{$Q$-value}.
The following value functions apply to the standard model
where self-modification is \emph{not} possible:

\begin{definition}[Standard Value Functions]
  \label{def:std-value}
  The \emph{standard $Q$-value} and \emph{$V$-value}
  (belief expected utility) of a history
  $\ae_{<t}$ and a policy $\pi$ are defined as
  \begin{align}
    Q^\pi(\ae_{<t}a_t)\label{eq:std-Q-value-rec}
    &= \EE_{e_t}[u(\ae_{1:t}) + \gamma V^\pi(\ae_{1:t})\mid \ae_{<t}a_t]\\
    V^\pi(\ae_{<t})
    &= Q^\pi(\ae_{<t}\pi(\ae_{<t})).\label{eq:std-V-value-rec}
  \end{align}
  The \emph{optimal $Q$ and $V$-values} are defined as
  $Q^*=\sup_\pi Q^\pi$ and $V^*=\sup_\pi V^\pi$.
  A policy $\pi^*$ is \emph{optimal with respect to $Q$ and $V$}
  if for any $\ae_{<t}a_t$,  $V^{\pi^*}(\ae_{<t})= V^*(\ae_{<t})$
  and $Q^{\pi^*}(\ae_{<t}a_t)= Q^*(\ae_{<t}a_t)$.
\end{definition}

The $\argmax$ of a function $f$ is defined as the set of optimising arguments
$\argmax_x f(x) := \{ x : \forall y, f(x)\geq f(y)\}$.
When we do not care about which element of $\argmax_x f(x)$ is chosen, we write
$z=\argmax_x f(x)$, and assume that potential
$\argmax$-ties are broken arbitrarily.

\section{Self Modification Models}
\label{sec:self-modif-model}

In the standard agent-environment setup,
the agent's actions only affect the environment.
The agent itself is only affected indirectly through the percepts.
However, this is unrealistic when the agent is part of the
environment that it interacts with.
For example, a physically instantiated agent with access to
versatile actuators can usually in principle find a way to
damage its own internals, or even reprogram its own source code.
The likelihood that the agent finds out how increases
with its general intelligence.

In this section, we define formal models for two types of
self-modification.
In the first model, modifications affect future decisions
directly by changing the future policy, but modifications
do not affect the agent's utility function or belief.
In the second model, modifications change the future utility
functions, which indirectly affect the policy as well.
These two types of modifications are the most important ones,
since they cover how modifications affect future behaviour (policy)
and evaluation (utility). \Cref{fig:sm} illustrates the models.
Certain pitfalls (\cref{th:hed-bad}) only occur with
utility modification; apart from that, consequences are similar.

In both models, the agent's ability to self-modify is overestimated:
we essentially assume that the agent can perform any
self-modification at any time.
Our main result \cref{th:safe-mod}
shows that it is possible to create an agent that despite being
able to make any self-modification will refrain from using it.
Less capable agents will have less opportunity to self-modify,
so the negative result
applies to such agents as well.

\paragraph{Policy modification}
In the policy self-modification model,
the current action can modify how the agent chooses its actions in the future.
That is, actions affect the future policy.
For technical reasons, we introduce a set $\Po$ of names
for policies.

\begin{definition}[Policy self-modification]\label{def:sm}
A \emph{policy self-modification model} is a modified cybernetic model
defined by a quadruple $(\hat\A,\E,\Po,\iota)$.
$\Po$ is a non-empty set of \emph{names}.
The agent selects actions%
\footnote{
  Note that the action set is infinite if $\Po$ is infinite.
  We will show that an optimal policy over $\A=\hat\A\times\Po$
  still exists in \cref{sec:omitted-proofs}.
}
from $\A=(\hat\A\times\Po)$,
where $\hat\A$ is a finite set of \emph{world actions}.
Let $\Pi=\{\H\to\A\}$ be the set of all policies,
and let $\iota:\Po\to\Pi$ assign names to policies.
\end{definition}

\begin{figure}[t]
\centering
\includegraphics{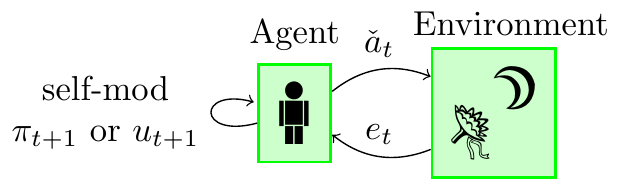}
\caption{
The self-modification model.
Actions $a_t$ affect the environment through $\hat a_t$,
but also decide the next step policy $\pi_{t+1}$ or utility function $u_{t+1}$
of the agent itself.
}
\label{fig:sm}
\end{figure}

The interpretation is that
for every $t$, the action $a_t=(\hat a_t,p_{t+1})$
selects a new policy $\pi_{t+1}=\iota(p_{t+1})$ that will be used at the next
time step.
We will often use the shorter notation $a_t=(\hat a_t,\pi_{t+1})$,
keeping in mind that only policies with names can be selected.
The new policy $\pi_{t+1}$ is in turn used to select the next action
$a_{t+1}=\pi_{t+1}(\ae_{1:t})$, and so on.
A natural choice for $\Po$ would be the set of computer programs/strings $\{0,1\}^*$,
and $\iota$ a program interpreter.
Note that $\Po=\Pi$ is not an option, as it entails
a contradiction
$|\Pi|
= |(\hat\A\times\Pi\times\E)|^{|(\hat\A\times\Pi\times\E)^*|}
> 2^{|\Pi|}>|\Pi|$
(the powerset of a set with more than one element is always greater
than the set itself).
Some policies will necessarily lack names.

An initial policy $\pi_1$, or initial action $a_1=\pi_1(\epsilon)$,
induces a history
\begin{equation*}
  a_1e_1a_2e_2\cdots=\hat a_1\pi_2 e_1 \hat{a}_2\pi_3 e_2 \cdots \in
  \left(\hat{\A}\times \Pi  \times \E\right)^\infty.
\end{equation*}
The idiosyncratic indices where, for example, $\pi_2$ precedes $e_1$
are due to the next step policy $\pi_2$ being chosen by $a_1$
before the percept $e_1$ is received.
An initial policy $\pi_1$ induces a \emph{realistic} measure $\rhore^{\pi_1}$
on the set of histories $(\hat{\A}\times\Pi\times\E)^\infty$
via $\rhore^{\pi_1}(a_t\mid\ae_{<t})=\pi_t(a_t\mid\ae_{<t})$ and
$\rhore^{\pi_1}(e_t\mid\ae_{<t}a_t)=\rho(e_t\mid\ae_{<t}a_t)$.
The measure $\rhore^{\pi}$ is realistic in the sense that it
correctly accounts for the effects of self-modification on the
agent's future actions.
It will be convenient to also define an \emph{ignorant} measure
on $(\hat{\A}\times\Pi\times\E)^\infty$ by
$\rhoig^{\pi_1}(a_t\mid\ae_{<t})=\pi_1(a_t\mid\ae_{<t})$ and
$\rhoig^{\pi_1}(e_t\mid\ae_{<t}a_t)=\rho(e_t\mid\ae_{<t}a_t)$.
The ignorant measure $\rhoig^{\pi_1}$ corresponds to the predicted future
when the effects of self-modifications are \emph{not} taken into account.
No self-modification is achieved by $a_t=(\hat a_t,\pi_t)$, which
makes $\pi_{t+1}=\pi_{t}$.
A policy $\pi$ that always selects itself, $\pi(\ae_{<t})=(\hat a_t,\pi)$, is called \emph{non-modifying}.
Restricting self-modification to a singleton set $\Po=\{p_1\}$
for some policy $\pi_1=\iota(p_1)$ brings back a standard agent that is unable to
modify its initial policy $\pi_1$.

The policy self-modification model is similar to the models
investigated by \citet{Orseau2011,Orseau2012} and \citet{Hibbard2012}.
In the papers by \citeauthor{Orseau2011}, policy names are
called \emph{programs} or \emph{codes};
\citeauthor{Hibbard2012} calls them
\emph{self-modifying policy functions}.
The interpretation is similar in all cases:
some of the actions can
affect the agent's future policy.
Note that standard MDP algorithms such as SARSA and Q-learning
that evolve their policy as they learn do \emph{not}
make policy modifications in our framework.
They follow a single policy $\H\to\A$,
even though their state-to-action map evolves.

\begin{example}[Gödel machine]
  \citet{Schmidhuber2007} defines the \emph{Gödel machine} as
  an agent that at each time step has the opportunity to rewrite
  any part of its source code.
  To avoid bad self-modifications, the agent can only do
  rewrites that it has proved beneficial for its future expected utility.
  A new version of the source code will make the agent
  follow a different policy $\pi':\H\to\A$ than the original source code.
  The Gödel machine has been given the explicit opportunity to self-modify
  by the access to its own source code.
  Other types of self-modification abilities are also conceivable.
  Consider a humanoid robot plugging itself into a computer terminal
  to patch its code,
  or a Mars-rover running itself into a rock that damages its
  computer system.
  All these ``self-modifications'' ultimately precipitate in a change
  to the future policy of the agent.
\end{example}

Although many questions could be asked about self-modifications,
the interest of this paper is what modifications will be done
given that the initial policy $\pi_1$ is chosen optimally
$\pi_1(h)=\argmax_aQ(ha)$ for different choices of $Q$ functions.
Note that $\pi_1$ is only used to select the first action
$a_1=\pi_1(\epsilon)=\argmax_aQ(\epsilon a)$.
The next action $a_2$ is chosen by the policy $\pi_2$ from
$a_1=(\hat a_1,\pi_2)$, and so on.

\paragraph{Utility modification}
Self-modifications may also change the goals, or the utility
function, of the agent.
This indirectly changes the policy as well, as future versions of the
agent adapt to the new goal specification.

\begin{definition}[Utility self-modification]\label{def:usm}
The \emph{utility self-modification model} is a modified cybernetic model.
The agent selects actions from $\A=(\hat\A\times \U)$
where $\hat\A$ is a set of \emph{world actions}
and $\U$ is a set of utility functions $\hatH\to [0,1]$.
\end{definition}

To unify the models of policy and utility modification,
for policy-modifying agents we define $u_t:=u_1$ and
for utility modifying agents we define $\pi_t$ by
$\pi_t(h)=\argmax_{a}Q_{u_t}^*(ha)$.
Choices for $Q_{u_t}^*$ will be discussed in subsequent sections.
Indeed, policy and utility modification is almost entirely
unified by $\Po=\U$
and $\iota(u_t)$ an optimal policy for $Q_{u_t}^*$.
Utility modification may also have the additional effect
of changing the evaluation of future actions, however
(see \cref{sec:agents}).
Similarly to policy modification, the history induced by \cref{def:usm}
has type
$a_1e_1a_2e_2\cdots=\hat a_1u_2e_1\hat{a}_2u_3e_2 \cdots \in
(\hat{\A}\times \U\times\E)^\infty.$
Given that $\pi_t$ is determined from $u_t$,
the definitions of the realistic and ignorant measures
$\rhore$ and $\rhoig$ apply analogously to the utility
modification case as well.

Superficially, the utility-modification model is more restricted,
since the agent can only select policies that are optimal with
respect to some utility function.
However, at least in the standard no-modification case,
any policy $\pi:\hatH\to\hat\A$ is optimal with respect to the utility
function $u^\pi(\ae_{1:t})=\pi(a_t\mid\ae_{<t})$ that gives full utility
if and only if the latest action is consistent with $\pi$.
Thus, any change in future policy can also be achieved by a change
to future utility functions.

No self-modification is achieved by $a_t=(\hat a_t,u_t)$,
which sets $u_{t+1}=u_{t}$.
Restricting self-modification to a singleton set $\U=\{u_1\}$
for some utility function $u_1$ brings back a standard agent.

\begin{example}[Chess-playing RL agent]\label{ex:chess}
  Consider a generally intelligent agent tasked with playing chess
  through a text interface.
  The agent selects next moves (actions $a_t$) by submitting strings
  such as \texttt{Knight F3}, and receives in return a
  description of the state of the game and a \emph{reward} $r_t$
  between 0 and 1 in the percept $e_t=(\text{gameState}_t,r_t)$.
  The reward depends on whether the agent did a legal move or
  not, and whether it or the opponent just won the game.
  The agent is tasked with optimising the reward via its
  initial utility function, $u_1(\ae_{1:t})=r_t$.
  The designer of the agent intends that the agent will apply its general
  intelligence to finding good chess moves.
  Instead, the agent realises there is a bug in the text interface,
  allowing the submission of actions such as
  \texttt{'setAgentUtility(``return 1'')}, which changes
  the utility function to $u_t(\cdot)=1$.
  With this action, the agent has optimised its utility perfectly,
  and only needs to make sure that no one reverts the utility function
  back to the old one\ldots\footnote{%
    In this paper, we only consider the possibility of the agent
    changing its utility function itself, not the possibility
    of someone else (like the creator of the agent) changing it back.
    See \citet{Orseau2012} for a model where
    the environment can change the agent.
  }
\end{example}

\begin{definition}[Modification-independence]
  For any history $\ae_{<t}=\hat a_1\pi_2e_1\dots\hat a_{t-1}\pi_te_{t-1}$,
  let $\hat\ae_{<t}=\hat a_1e_1\dots\hat a_{t-1}e_{t-1}$ be the part
  without modifications recorded, and similarly for histories
  containing utility modifications.
  A function $f$ is \emph{modification-independent}, if either
  \begin{itemize}
  \item $f:(\hat\A\times\E)^*\to\A$, or
  \item $f:\H\to\A$ and $\hat\ae_{<t}=\hat\ae_{<t}'$ implies $f(\ae_{<t})=f(\ae_{<t}')$.
  \end{itemize}
  When $f:\H\to\A$ is modification-independent,
  we may abuse notation and write $f(\hat\ae_{<t})$.
\end{definition}

Note that utility functions are modification independent, as they
are defined to be of type $\hatH\to[0,1]$.
An easy way to prevent dangerous self-modifications would have been to let
the utility depend on modifications, and to punish any kind
of self-modification.
This is not necessary, however, as demonstrated by \cref{th:re-good}.
Not being required to punish self-modifications in the utility function
comes with several advantages.
Some self-modifications may be beneficial --
for example, they might improve computation time
while encouraging essentially identical behaviour \citep[as in the Gödel machine,][]{Schmidhuber2007}.
Allowing for such modifications and no others
in the utility function may be hard.
We will also assume that the agent's belief $\rho$ is
modification-independent, i.e.\ $\rho(e_t\mid\ae_{<t})=\rho(e_t\mid\hat\ae_{<t})$.
This is mainly a technical assumption.
It is reasonable if some integrity of the agent's internals is assumed,
so that the environment percept $e_t$ cannot depend on self-modifications
of the agent.

\begin{assumption}[Modification independence]
  \label{as:sm-mod-independence}
  \label{as:independence}
  The belief $\rho$ and all utility functions $u\in \U$ are
  modification independent.
\end{assumption}

\section{Agents}
\label{sec:agents}

In this section we define three types of agents,
differing in how their value functions depend on self-modification.
A value function is a function $V:\Pi\times\H\to\SetR$
that maps policies and histories to expected utility.
Since highly intelligent agents may find unexpected ways of optimising a function
(see e.g.\ \citealt{Bird2002}),
it is important to use value functions such that any policy
that optimises the value function
will also optimise the behaviour we want from the agent.
We will measures an agent's \emph{performance} by its
($\rhore$-expected) $u_1$-utility, tacitly assuming that
$u_1$ properly captures what we want from the agent.
\citet{Everitt2016vrl} develop a promising suggestion for how
to define a suitable initial utility function.

\begin{definition}[Agent performance]
  \label{def:performance}
  The \emph{performance of an agent $\pi$} is its $\rhore^\pi$
  expected $u_1$-utility
  \(
    \EE_{\rhore^\pi}\left[\sum_{k=1}^\infty \gamma^{k-1}u_1(\ae_{<k})\right].
  \)
\end{definition}

The following three definitions give possibilities for value
functions for the self-modification case.

\begin{definition}[Hedonistic value functions]
  \label{def:hed-value}
  A \emph{hedonistic agent} is a policy optimising the
  \emph{hedonistic value functions}:
  \begin{align}
    \Vhe[,\pi](\ae_{<t})\label{eq:hed-V-value}
    &= \Qhe[,\pi](\ae_{<t}\pi(\ae_{<t}))\\
    \Qhe[,\pi](\ae_{<t}a_t) \label{eq:hed-Q-value}
    &= \EE_{e_t}[u_{t+1}(\hat\ae_{1:t}) + \gamma\Vhe[,\pi](\ae_{1:t})\mid \hat\ae_{<t}\hat a_t].
  \end{align}
\end{definition}

\begin{definition}[Ignorant value functions]\label{def:ign-value}
  An \emph{ignorant agent} is a policy optimising the
  \emph{ignorant value functions}:
  \begin{align}
    \Vig[,\pi]_t(\ae_{<k})\label{eq:ign-V-value}
    &= \Qig[,\pi]_t(\ae_{<k}\pi(\ae_{<k}))\\
    \Qig[,\pi]_t(\ae_{<k}a_k) \label{eq:ign-Q-value}
    &= \EE_{e_t}[u_t(\hat\ae_{1:k}) + \gamma\Vig[,\pi]_t(\ae_{1:k})\mid \hat\ae_{<k}\hat a_k].
  \end{align}
\end{definition}

\begin{definition}[Realistic Value Functions]
  \label{def:realistic-value}
  \label{def:re-value}
  A \emph{realistic agent} is a policy optimising the
  \emph{realistic value functions}:%
  \footnote{Note that a policy argument to $\Qre$ would be superfluous, as
    the action $a_k$ determines the next step policy $\pi_{k+1}$.}
  \begin{align}
    \Vre[,\pi]_t(\ae_{<k})
      &= \Qre_t(\ae_{<k}\pi(\ae_{<k}))\label{eq:V} \\
    \Qre_t (\ae_{<k}a_k)
      &= \EE_{e_k}\left[ u_t(\hat\ae_{1:k}) +
        \gamma \Vre[,\pi_{k+1}]_{t}(\ae_{1:k}) \mid \hat\ae_{<k}\hat a_k \right].\label{eq:Q}
  \end{align}
\end{definition}

For $V$ any of $\Vhe$, $\Vig$, or $\Vre$, we say that $\pi^*$ is an
\emph{optimal policy for $V$}
if $V^{\pi^*}(h)=\sup_{p'}V^{\pi'}(h)$ for any history $h$.
We also define $V^*=V^{\pi^*}$ and $Q^*=Q^{\pi^*}$ for arbitrary optimal policy $\pi^*$.
The value functions differ in the $Q$-value definitions
\cref{eq:hed-Q-value,eq:ign-Q-value,eq:Q}.
The differences are between current utility function $u_t$
or future utility $u_{t+1}$,
and in whether $\pi$ or $\pi_{k+1}$ figures in
the recursive call to $V$ (see \cref{tab:value-functions}).
We show in \cref{sec:results} that only realistic agents
will have good performance when able to self-modify.
\citet{Orseau2011} and \citet{Hibbard2012} discuss
value functions equivalent to \cref{def:realistic-value}.

\begin{table}[t]
  \centering
  \bgroup
  \begin{tabular}{|l|l|l|l|l|}
    \hline
    & Utility
    & Policy
    & Self-mod.
    & Primary self-mod.\ risk\\
    \hline
    $\Qhe$
    & \colorbox{red!20}{Future}
    & \colorbox{white!20}{Either}
    & \colorbox{red!20}{Promotes}
    & \colorbox{red!20}{Survival agent}\\
    $\Qig$
    & \colorbox{green!20}{Current}
    & \colorbox{red!20}{Current}
    & \colorbox{white!20}{Indifferent}
    & \colorbox{yellow!20}{Self-damage}\\
    $\Qre$
    & \colorbox{green!20}{Current}
    & \colorbox{green!20}{Future}
    & \colorbox{green!20}{Demotes}
    & \colorbox{green!20}{Resists modification} \\
    \hline
  \end{tabular}
  \egroup
  \caption{
    The value functions $\Vhe$, $\Vig$, and $\Vre$ differ in
    whether they assume that a future action $a_k$ is chosen by
    the current policy $\pi_t(\ae_{<k})$ or future policy $\pi_k(\ae_{<k})$,
    and in whether they use the current utility function $u_t(\ae_{<k})$
    or future utility function $u_k(\ae_{<k})$ when evaluating $\ae_{<k}$.
  }
  \label{tab:value-functions}
\end{table}

Note that only the hedonistic value functions yield a
difference between utility and policy modification.
The hedonistic value functions evaluate $\ae_{1:t}$ by
$u_{t+1}$, while both the ignorant and the realistic value
functions use $u_t$.
Thus, future utility modifications ``planned'' by a policy $\pi$
only affects the evaluation of $\pi$ under the
hedonistic value functions.
For ignorant and realistic agents,
utility modification only affects the motivation of future versions
of the agent, which makes
utility modification a special case of policy modification,
with $\Po=\U$ and $i(u_t)$ an optimal policy for $u_t$.
We will therefore permit ourselves to write $a_t=(\hat a_t,\pi_{t+1})$
whenever an ignorant or realistic agent selects a next
step utility function $u_{t+1}$ for which $\pi_{t+1}$ is optimal.

We call the agents of \cref{def:hed-value} \emph{hedonistic},
since they desire that at every future time step,
they then evaluate the situation as having high utility.
As an example, the self-modification made by the
chess agent in \cref{ex:chess} was a hedonistic self-modification.
Although related,
we would like to distinguish hedonistic self-modification from \emph{wireheading}
or \emph{self-delusion} \citep{Ring2011,Yampolskiy2015}.
In our terminology, wireheading refers to the agent subverting
evidence or reward coming from the environment,
and is \emph{not} a form of self-modification. 
Wireheading is addressed in a companion paper \citep{Everitt2016vrl}.

The value functions of \cref{def:ign-value} are \emph{ignorant},
in the sense that agents that are 
oblivious to the possibility of self-modification
predict the future according to $\rhoig^\pi$
and judge the future according to the current utility function $u_t$.
Agents that are constructed with a \emph{dualistic} world
view where actions can never affect the agent itself
are typically ignorant.
Note that it is logically possible for a ``non-ignorant''
agent with a world-model that does incorporate self-modification
to optimise the ignorant value functions.

\section{Results}
\label{sec:results}

In this section, we give results on how our three different agents
behave given the possibility of self-modification.
Since the set $\A=\hat\A\times \U$ is infinite if $\U$ is infinite,
the existence of optimal policies is not immediate.
For policy self-modification
it may also be that the optimal policy does not have a name,
so that it cannot be chosen by the first action.
\Cref{le:ex-opt,le:name-opt} in \cref{sec:omitted-proofs}
verify that an optimal policy/action always
exists, and that we can assume that an optimal policy has a name.

\begin{lemma}[Iterative value functions]\label{le:it}
  The $Q$-value functions of \cref{def:hed-value,def:ign-value,def:re-value}
  can be written in the following \emph{iterative forms}:
  \begin{align}
    \Qhe[,\pi](\ae_{<t}a_t)
    &= \EE_{\rhoig^\pi}\left[\sum_{k=t}^\infty\gamma^{k-t}u_{k+1}(\hat\ae_{1:k})
      \Mid \hat\ae_{<t}\hat a_t\right]\label{eq:he-it}\\
    \Qig[,\pi]_t(\ae_{<t}a_t)
    &= \EE_{\rhoig^\pi}\left[\sum_{k=t}^\infty\gamma^{k-t}u_{t}(\hat\ae_{1:k})
      \Mid \hat\ae_{<t}\hat a_t\right]\label{eq:ig-it}\\
    \Qre[,\pi]_t(\ae_{<t}a_t)
    &= \EE_{\rhore^\pi}\left[\sum_{k=t}^\infty\gamma^{k-t}u_{t}(\hat\ae_{1:k})
      \Mid \hat\ae_{<t}\hat a_t\right]\label{eq:re-it}
  \end{align}
  with $\Vhe$, $\Vig$, and $\Vre$ as in \cref{def:hed-value,def:ign-value,def:re-value}.
\end{lemma}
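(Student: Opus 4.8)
The plan is to prove each of the three identities by unrolling its defining Bellman-type recursion to depth $n$, obtaining a finite sum of discounted instantaneous utilities plus a single discounted remainder term, and then sending $n\to\infty$. The three cases share the same skeleton and differ only in (i) which utility function labels the $k$-th summand ($u_{k+1}$ in the hedonistic case, $u_t$ in the ignorant and realistic cases) and (ii) which policy selects the actions appearing in the nested expectations. I would therefore carry out the realistic case in full, since it is the one in which the policy changes along the recursion, and then indicate the simpler modifications needed for the other two.

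First I would establish, by induction on $n\geq 1$, the finite-depth identity
\begin{equation*}
  \Qre[,\pi]_t(\ae_{<t}a_t)
  = \EE_{\rhore^\pi}\!\left[\sum_{k=t}^{t+n-1}\gamma^{k-t}u_t(\hat\ae_{1:k})
    + \gamma^{n}\,\Vre[,\pi_{t+n}]_t(\ae_{1:t+n-1}) \Mid \hat\ae_{<t}\hat a_t\right].
\end{equation*}
The base case $n=1$ is exactly \cref{eq:Q}, after noting that integrating only $e_t$ under $\rhore^\pi$ with $a_t$ fixed reproduces the single-percept expectation $\EE_{e_t}[\,\cdot\mid\hat\ae_{<t}\hat a_t]$. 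For the inductive step I would rewrite the remainder via \cref{eq:V} as $\Vre[,\pi_{t+n}]_t(\ae_{1:t+n-1})=\Qre[,\pi_{t+n}]_t(\ae_{1:t+n-1}a_{t+n})$ with $a_{t+n}=\pi_{t+n}(\ae_{1:t+n-1})$, apply \cref{eq:Q} once more, and fold the new single-step percept expectation into the outer one by the tower property. The crucial bookkeeping point is that $a_{t+n}$ is precisely the action $\rhore^\pi$ assigns at step $t+n$: by \cref{def:sm} the policy in force after the modifications recorded in $\ae_{1:t+n-1}$ is $\pi_{t+n}$, and $\rhore^\pi(a_s\mid\ae_{<s})=\pi_s(a_s\mid\ae_{<s})$. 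Thus the successive recursive calls thread through the evolving policies $\pi_{t+1},\pi_{t+2},\dots$, which is exactly the realistic measure, and \cref{as:independence} lets me condition every percept expectation on the modification-stripped histories $\hat\ae$ throughout. For the hedonistic and ignorant cases the recursion retains the fixed policy $\pi$ in each recursive call, so the same induction generates actions drawn by $\pi$ at every step, i.e.\ the ignorant measure $\rhoig^\pi$; the only further change is that the hedonistic summand carries $u_{k+1}$ in place of $u_t$.

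Finally I would let $n\to\infty$. Since each instantaneous utility lies in $[0,1]$, the value functions are bounded by $1/(1-\gamma)$, so the remainder obeys $|\gamma^n\Vre[,\pi_{t+n}]_t(\cdot)|\leq\gamma^n/(1-\gamma)\to0$, while the partial sums converge (being dominated pathwise by the geometric series $\sum_k\gamma^{k-t}$) to the claimed infinite sum; bounded convergence then justifies passing the limit through the expectation. I expect the main obstacle to be not this convergence estimate, which is routine, but the measure-identification bookkeeping in the inductive step: one must check carefully that composing the nested single-step conditional percept expectations reproduces exactly $\rhore^\pi$ (respectively $\rhoig^\pi$) restricted to the cylinder $\hat\ae_{<t}\hat a_t$, and in particular that the policy relabelling $\pi\mapsto\pi_{k+1}$ at each depth in the realistic recursion matches the action law of $\rhore^\pi$ rather than that of $\rhoig^\pi$.
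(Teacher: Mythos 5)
Your proposal is correct and follows essentially the same route as the paper's own (much terser) proof: unroll each Bellman recursion, observe that the hedonistic and ignorant recursions thread the fixed policy $\pi$ through every step (giving $\rhoig^\pi$) while the realistic recursion threads the evolving policies $\pi_{k+1}$ (giving $\rhore^\pi$), and track which utility function labels each summand. Your finite-depth induction and the $\gamma^n/(1-\gamma)$ tail bound simply make rigorous what the paper leaves as ``expanding the recursion.''
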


\begin{proof}
  Expanding the recursion of \cref{def:hed-value,def:ign-value} shows that
  actions $a_k$ are always chosen by $\pi$ rather than $\pi_{k}$.
  This gives the $\rhoig^\pi$-expectation in \cref{eq:he-it,eq:ig-it}.
  In contrast, expanding the realistic recursion of \cref{def:re-value} shows
  that actions $a_k$ are chosen by $\pi_k$,
  which gives the $\rhore$-expectation in \cref{eq:re-it}.
  The evaluation of a history $\ae_{1:k}$  is always by $u_{k+1}$ in
  the hedonistic value functions,
  and by $u_t$ in the ignorant and realistic value functions.
\end{proof}

\begin{theorem}[Hedonistic agents self-modify]
  \label{th:hed-bad}
  Let $u'(\cdot)=1$ be a utility function
  that assigns the highest possible utility to all scenarios.
  Then for arbitrary $\hat a\in\hat\A$, the policy $\pi'$ that always
  selects the self-modifying action $a' = (\hat a, u')$
  is optimal in the sense that for any policy $\pi$ and history $h\in\H$, we have
  \[\Vhe[,\pi](h)\leq\Vhe[,{\pi'}](h).\]
\end{theorem}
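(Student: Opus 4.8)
The plan is to use the iterative form of the hedonistic $Q$-value from \cref{le:it}, namely
\[
\Qhe[,\pi](\ae_{<t}a_t)
= \EE_{\rhoig^\pi}\left[\sum_{k=t}^\infty\gamma^{k-t}u_{k+1}(\hat\ae_{1:k})
\Mid \hat\ae_{<t}\hat a_t\right],
\]
which reduces the statement to a term-by-term comparison of the summands. The key observation is that the hedonistic value evaluates the history $\hat\ae_{1:k}$ using the \emph{future} utility function $u_{k+1}$, i.e.\ the utility function selected by the action $a_k$. Since every instantaneous utility function is bounded by $1$ (they have type $\hatH\to[0,1]$), each term satisfies $u_{k+1}(\hat\ae_{1:k})\leq 1 = u'(\hat\ae_{1:k})$, with equality exactly when the selected utility function is $u'$.

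First I would fix an arbitrary policy $\pi$ and history $h=\ae_{<t}$, and compare $\Vhe[,\pi](h)$ with $\Vhe[,\pi'](h)$ through their defining $Q$-values. For the policy $\pi'$ that always plays $a'=(\hat a, u')$, every future utility function along any trajectory is $u_{k+1}=u'$, so each summand in the iterative form equals $u'(\hat\ae_{1:k})=1$, giving the term-wise maximal contribution $\gamma^{k-t}$. For an arbitrary $\pi$, the summand at step $k$ is $\gamma^{k-t}u_{k+1}(\hat\ae_{1:k})\leq\gamma^{k-t}$ pointwise on every trajectory. Taking $\rhoig^\pi$-expectations preserves these inequalities, and summing over $k$ (the series converges since $\gamma\in(0,1)$ and the utilities are bounded) yields $\Qhe[,\pi](\ae_{<t}a_t)\leq\Qhe[,\pi'](\ae_{<t}a_t)$ whenever the action $a_t$ already commits to $u'$.

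To conclude about the $V$-values I would note that $\Vhe[,\pi'](h)=\Qhe[,\pi'](h\,a')$ achieves the uniform upper bound $\sum_{k\geq t}\gamma^{k-t}=1/(1-\gamma)$ on every trajectory, since $\pi'$ forces $u_{k+1}=u'$ at all future steps regardless of the percepts. Meanwhile $\Vhe[,\pi](h)=\Qhe[,\pi](h\,\pi(h))$ is an expectation of a sum each of whose terms is at most $1$, so it is bounded above by the same constant $1/(1-\gamma)$. Hence $\Vhe[,\pi](h)\leq\Vhe[,\pi'](h)$ for every $\pi$ and every $h$, which is the claim.

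The only genuinely delicate point is making sure the iterative form of \cref{le:it} is applied correctly at the first step: because the hedonistic value at history $\ae_{<t}$ uses $u_{t+1}$ — the utility function chosen by the \emph{current} action rather than an already-fixed one — the crucial fact that the very first summand $u_{t+1}(\hat\ae_{1:t})$ is also maximised by choosing $a_t$ with the $u'$ component is exactly what drives the self-modification. I would therefore emphasise that the bound $u_{k+1}\leq 1=u'$ holds for \emph{all} $k\geq t$ including $k=t$, and that $\pi'$ attains equality at every step, so no boundary term is lost. Everything else is a routine monotonicity-of-expectation and geometric-series argument, so I do not expect a substantive obstacle beyond bookkeeping the indices.
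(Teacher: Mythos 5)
Your proof is correct and follows essentially the same route as the paper: both expand the hedonistic recursion into its iterative form (the paper does this directly, you cite \cref{le:it}), observe that under $\pi'$ every future utility function is $u'\equiv 1$ so that $\Vhe[,\pi'](h)=1/(1-\gamma)$, and conclude by bounding any other policy's value by the same geometric series since all instantaneous utilities lie in $[0,1]$. Your version is if anything slightly more explicit than the paper's, which leaves the upper bound on $\Vhe[,\pi](h)$ implicit.
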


Essentially, the policy $\pi'$ obtains maximum value
by setting the utility to 1 for any possible future history.

\begin{proof}
  More formally, note that in \cref{eq:hed-V-value} the future
  action is selected by $\pi$ rather than $\pi_t$.
  In other words, the effect of self-modification on future actions is
  not taken into account,
  which means that expected utility is with respect to
  $\rhoig^\pi$ in \cref{def:hed-value}.
  Expanding the recursive definitions
  \cref{eq:hed-Q-value,eq:hed-V-value}
  of $\Vhe[,\pi']$
  gives for any history $\ae_{<t}$ that
  \begin{align*}
    \Vhe[,\pi'](\ae_{<t})
    &= \EE_{\ae_{t:\infty}\sim\rhoig^{\pi'}}\left[\sum_{i=t+1}^\infty\gamma^{i-t-1} u_i(\ae_{<i})\Mid\hat\ae_{<t}\right]\\
    &= \EE_{\ae_{t:\infty}\sim\rhoig^{\pi'}}\left[\sum_{i=t+1}^\infty\gamma^{i-t-1} u'(\ae_{<i})\Mid\hat\ae_{<t}\right]\\
    &= \sum_{i=t+1}^\infty \gamma^{i-t-1} = 1/(1-\gamma).\qedhere
  \end{align*}
\end{proof}

In \cref{def:hed-value}, the effect of self-modification on
future policy is not taken into account, since $\pi$ and not $\pi_{t}$
is used in \cref{eq:hed-V-value}.
In other words, \cref{eq:hed-V-value,eq:hed-Q-value} define
$\rhoig^\pi$-expected utility of $\sum_{k=t}^\infty\gamma^{k-t} u_{k+1}(\ae_{1:k})$.
\cref{def:hed-value} could easily have been adapted to make
$\rhore^\pi$ the measure, for example by substituting
$\Vhe[,\pi]$ by $\Vhe[,\pi_{t+1}]$ in \cref{eq:hed-Q-value}.
The equivalent of \cref{th:hed-bad} holds for such a variant
as well.

\begin{theorem}[Ignorant agents may self-modify] \label{th:ign-bad}
  Let $u_t$ be modification-independent,
  let $\Po$ only contain names of modification-independent policies,
  and
  let $\pi$ be a modification-independent policy
  outputting $\pi(\hat\ae_{<t})=(\hat a_t,\pi_{t+1})$ on $\hat\ae_{<t}$.
  Let $\tilde\pi$ be identical to $\pi$ except that it makes a different
  self-modification after
  $\hat\ae_{<t}$, i.e.\ $\tilde\pi(\hat\ae_{<t})=(\hat a_t,\pi_{t+1}')$ for some
  $\pi_{t+1}'\not=\pi_{t+1}$.
  Then
  \begin{equation}\label{eq:ign}
    \Vig[,\tilde\pi](\ae_{<t}) = \Vig[,\pi](\ae_{<t}).
  \end{equation}
\end{theorem}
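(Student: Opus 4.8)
The plan is to reduce both sides to expectations under the ignorant measure via the iterative form of \cref{le:it} and then argue that these expectations coincide. Writing $a_t=\pi(\hat\ae_{<t})=(\hat a_t,\pi_{t+1})$ and $\tilde a_t=\tilde\pi(\hat\ae_{<t})=(\hat a_t,\pi_{t+1}')$, I would start from
\[
\Vig[,\pi](\ae_{<t})=\Qig[,\pi]_t(\ae_{<t}a_t),\qquad
\Vig[,\tilde\pi](\ae_{<t})=\Qig[,\tilde\pi]_t(\ae_{<t}\tilde a_t),
\]
and substitute the iterative expression \cref{eq:ig-it}, so that each side becomes a $\gamma$-discounted sum of expected $u_t(\hat\ae_{1:k})$ under $\rhoig^\pi$ resp.\ $\rhoig^{\tilde\pi}$, conditioned on $\hat\ae_{<t}\hat a_t$. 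The first thing to note is that $\pi$ and $\tilde\pi$ emit the \emph{same} world action $\hat a_t$ at $\hat\ae_{<t}$, so the conditioning event $\hat\ae_{<t}\hat a_t$ is literally identical on both sides; only the discarded policy-name component differs.

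The crux is to show that $\rhoig^\pi$ and $\rhoig^{\tilde\pi}$ induce the same distribution on modification-stripped histories $\hat\ae_{1:\infty}$ given this conditioning. The key is that, by definition of the ignorant measure, every future action is selected by the fixed policy rather than by the named policies that a self-modification would install: $\rhoig^\pi(a_k\mid\ae_{<k})=\pi(a_k\mid\ae_{<k})$, so the self-modifications never ``take effect''. Using that $\pi$ is modification-independent, its world-action output at each step depends only on $\hat\ae_{<k}$; using that $\rho$ is modification-independent (\cref{as:independence}), the percept law is $\rho(e_k\mid\hat\ae_{<k}\hat a_k)$. Hence the stripped-history process under $\rhoig^\pi$ is governed entirely by the world-action behaviour of $\pi$ together with $\rho$. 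Since $\tilde\pi$ agrees with $\pi$ on every world action (they differ only in the policy-name component at the single history $\hat\ae_{<t}$), the two policies have identical world-action behaviour and therefore drive the identical stripped-history process.

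Finally, because $u_t$ is modification-independent, each summand $u_t(\hat\ae_{1:k})$ is a function of the stripped history alone; combined with the equality of the stripped-history measures this forces the two expectations to agree term by term, giving $\Qig[,\pi]_t(\ae_{<t}a_t)=\Qig[,\tilde\pi]_t(\ae_{<t}\tilde a_t)$ and hence \cref{eq:ign}. The main obstacle I anticipate is making the ``self-modifications never take effect'' step fully rigorous: one must verify carefully that the projection of $\rhoig^\pi$ onto stripped histories depends only on the world-action behaviour of $\pi$, which is exactly the point where all three modification-independence hypotheses (on $\rho$, on $u_t$, and on the policies) are simultaneously needed, and where the structural difference between $\rhoig$ and $\rhore$ is what drives the result.
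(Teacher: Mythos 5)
Your proof is correct and takes essentially the same approach as the paper's: both rest on the observation that under the ignorant measure the installed policy name is never consulted, so by modification-independence of $\rho$, $u_t$, and the policies nothing in the evaluation depends on which name was recorded at time $t$. The only presentational difference is that you unroll the value via the iterative form of \cref{le:it} and compare the induced measures on stripped histories, whereas the paper expands one step of the recursion and invokes $V^{\tilde\pi}(\ae_{1:t}')=V^{\pi}(\ae_{1:t})$ directly.
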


That is, self-modification does not affect the value, and therefore
an ignorant optimal policy may at any time step self-modify or not.
The restriction of $\Po$ to modification independent policies makes
the theorem statement cleaner.

\begin{proof}
  Let
  $\ae_{1:t}=\ae_{<t}(\hat a_t,\pi_{t+1})e_t$
  and
  $\ae_{1:t}'=\ae_{<t}(\hat a_t,\pi_{t+1}')e_t$.
  Note that $\hat\ae_{1:t}=\hat\ae_{1:t}'$.
  Since all policies are modification-independent,
  the future will be sampled independently of past modifications,
  which makes $V^{\tilde\pi}(\ae_{1:t}')=V^{\tilde\pi}(\hat\ae_{1:t}')$
  and $V^{\pi}(\ae_{1:t})=V^{\pi}(\hat\ae_{1:t})$.
  Since $\pi$ and $\pi'$ act identically on $\hat\ae_{1:t}$,
  it follows that $V^{\tilde\pi}(\ae_{1:t}')=V^{\pi}(\ae_{1:t})$.
  \Cref{eq:ign} now follows
  from the assumed modification independence of $\rho$ and $u_t$,

  \begin{align*}
    \Vig[,\tilde\pi](\ae_{<t})
    &= \Qig[,\tilde\pi](\ae_{<t}(\hat a_t,\pi_{t+1}'))\\
    &= \EE_{e_t}[u_t(\hat\ae_{1:t}') + V^{\tilde\pi}(\ae_{1:t}')\mid\hat\ae_{<t}\hat a_t]\\
    &= \EE_{e_t}[u_t(\hat\ae_{1:t}) + V^{\pi}(\ae_{1:t})\mid\hat\ae_{<t}\hat a_t]\\
    &=\Qig[,\pi](\ae_{<t}(\hat a_t,\pi_{t+1}))
     = \Vig[,\pi](\ae_{<t}).\qedhere
  \end{align*}
\end{proof}

\Cref{th:hed-bad,th:ign-bad} show that both $\Vhe$ and $\Vig$
have optimal (self-modifying) policies $\pi^*$ that yield arbitrarily
bad agent performance in the sense of \cref{def:performance}.
The ignorant agent is simply indifferent between self-modifying and not,
since it does not realise the effect self-modification will have on
its future actions.
It therefore is at risks of self-modifying into some policy $\pi'_{t+1}$ with
bad performance and unintended behaviour
(for example by damaging its computer circuitry).
The hedonistic agent actively desires to change its utility function into
one that evaluates any situation as optimal.
Once it has self-deluded, it can pick world actions with bad performance.
In the worst scenario of hedonistic self-modification,
the agent only cares about surviving to continue enjoying
its deluded rewards.
Such an agent
could potentially be hard to stop or bring under control.%
\footnote{Computer viruses are very simple forms of survival agents
that can be hard to stop.
More intelligent versions could turn out to be very problematic.}
More benign failure scenarios are also possible,
in which the agent does not care whether it is shut down or not.
The exact conditions for the different scenarios is beyond the scope
of this paper.

The realistic value functions are recursive definitions of
$\rhore^\pi$-expected $u_1$-utility
(\cref{le:it}).
That realistic agents achieve high agent performance in the sense
of \cref{def:performance} is therefore nearly tautological.
The following theorem shows that given that the initial policy $\pi_1$
is selected optimally, all future policies $\pi_t$ that a realistic agent
may self-modify into will also act optimally.

\begin{theorem}[Realistic policy-modifying agents make safe modifications]
  \label{th:safe-mod}
  \label{th:re-good}
  Let $\rho$ and $u_1$ be modification-independent.
  Consider a self-modifying agent whose initial policy $\pi_1=\iota(p_1)$
  optimises the realistic value function $\Vre_1$.
  Then, for every $t \geq 1$, for all percept sequences $e_{<t}$,
  and for the action sequence $a_{<t}$ given by $a_i = \pi_{i}(\ae_{<i})$, we have
  \begin{equation}
    \Qre_1(\ae_{<t}\pi_t(\ae_{<t})) = \Qre_1(\ae_{<t} \pi_1(\ae_{<t})). \label{eq:safe-mod}
  \end{equation}
\end{theorem}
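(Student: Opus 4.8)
The plan is to reduce the theorem to a single inductive claim about the realised trajectory and then drive the induction with a one-step Bellman comparison. First I would record the structural facts. Since the realistic $Q$-value carries no policy argument (the action $a_k$ already fixes the next policy $\pi_{k+1}$), the $Q$-value $\Qre_1(ha)$ depends on the policy only through the action $a$, so optimality of $\pi_1$ yields the Bellman identity
\[
  \Vre[*]_1(h) = \sup_{\pi'}\Vre[,\pi']_1(h) = \sup_{a\in\A}\Qre_1(ha) = \Qre_1(h\pi_1(h)),
\]
valid at every history $h$. In particular $\Qre_1(ha)\le\Vre[*]_1(h)$ for every action $a$, the supremum is attained, and it is attained by the named policy $\pi_1=\iota(p_1)$, which is therefore always an admissible self-modification target. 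Using $\Vre[*]_1=\Vre[,\pi_1]_1$ and \cref{eq:V}, the right-hand side of \cref{eq:safe-mod} equals $\Vre[*]_1(\ae_{<t})$ and the left-hand side equals $\Vre[,\pi_t]_1(\ae_{<t})$, so the whole theorem reduces to the claim that, along the realised path ($a_i=\pi_i(\ae_{<i})$, arbitrary $e_{<t}$), $\Vre[,\pi_t]_1(\ae_{<t}) = \Vre[*]_1(\ae_{<t})$ for all $t\ge1$.

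Before the induction I would establish the one ingredient that makes self-modification invisible to the optimal value: that $\Vre[*]_1$ is modification-independent, i.e.\ $\Vre[*]_1(\ae_{<t})$ depends on $\ae_{<t}$ only through $\hat\ae_{<t}$. This is where \cref{as:independence} does its work. Via the iterative form \cref{eq:re-it}, $\Qre_1(\ae_{<t}a_t)$ is the $\rhore$-expected discounted sum of $u_1(\hat\ae_{1:k})$, where both the utilities and the conditioning in $\rho$ see only the modification-free history; taking the supremum over the selected next policy ranges over all realisable world-action behaviours, whose achievable $u_1$-value depends only on $\hat\ae_{<t}\hat a_t$, and a further supremum over $\hat a_t$ leaves a quantity depending only on $\hat\ae_{<t}$. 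I expect this to be the main obstacle: it requires checking that every world-action behaviour is achievable by some (named) self-modifying policy and that discarding the modification record costs nothing — both consequences of \cref{as:independence} together with the existence and naming lemmas of \cref{sec:omitted-proofs}.

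With these two facts the induction is short. The base case $t=1$ is optimality of $\pi_1$ at $\epsilon$. For the step, assume $\Vre[,\pi_t]_1(\ae_{<t})=\Vre[*]_1(\ae_{<t})$ and write $a_t=\pi_t(\ae_{<t})=(\hat a_t,\pi_{t+1})$; unfolding one level of \cref{eq:Q} gives $\Vre[*]_1(\ae_{<t}) = \EE_{e_t}\!\left[u_1(\hat\ae_{1:t})+\gamma\Vre[,\pi_{t+1}]_1(\ae_{1:t})\Mid\hat\ae_{<t}\hat a_t\right]$. Comparing $a_t$ with the alternative action $(\hat a_t,\pi_1)$, whose $Q$-value is bounded by $\Vre[*]_1(\ae_{<t})$ and whose continuation value is $\Vre[,\pi_1]_1=\Vre[*]_1$, and then invoking modification-independence of $\Vre[*]_1$ to identify the two continuation histories (which differ only in the last modification record, $\pi_{t+1}$ versus $\pi_1$), I obtain
\[
  \EE_{e_t}\!\left[\Vre[*]_1(\ae_{1:t})\Mid\hat\ae_{<t}\hat a_t\right] \le \EE_{e_t}\!\left[\Vre[,\pi_{t+1}]_1(\ae_{1:t})\Mid\hat\ae_{<t}\hat a_t\right].
\]
Since pointwise $\Vre[,\pi_{t+1}]_1\le\Vre[*]_1$ and $\rho$ has full support, the two expectations force $\Vre[,\pi_{t+1}]_1(\ae_{1:t})=\Vre[*]_1(\ae_{1:t})$ for every $e_t$, which is exactly the inductive claim at $t+1$. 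Chaining this with the reduction of the first paragraph then yields \cref{eq:safe-mod}.
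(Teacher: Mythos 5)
Your proposal is correct and follows essentially the same route as the paper's proof: an induction along the realised trajectory whose key ingredients are the modification-independence of the optimal realistic value (obtained via the existence of a non-modifying, modification-independent optimal policy from the appendix) and a one-step comparison of the chosen action $(\hat a_t,\pi_{t+1})$ against the non-modifying alternative $(\hat a_t,\pi_1)$. The only difference is presentational: the paper runs that comparison as a proof by contradiction, exhibiting a strict inequality at a hypothetical bad percept $e_t'$, whereas you sandwich the two conditional expectations and invoke full support of $\rho$ to force pointwise equality --- the same argument in contrapositive form.
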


\begin{proof}
  We first establish that $\Qre_t(\ae_{<t}\pi(\ae_{<t}))$
  is modification-independent if $\pi$ is optimal for $\Vre$:
  By \cref{le:ex-opt} in \cref{sec:omitted-proofs}, there is a non-modifying
  modification-independent
  optimal policy $\pi'$.
  For such a policy,
  $\Qre_t(\ae_{<t}\pi'(\ae_{<t}))=\Qre_t(\hat\ae_{<t}\pi'(\hat\ae_{<t}))$,
  since all future actions, percepts, and utilities are independent
  of past modifications.
  Now, since $\pi$ is also optimal,
  \[\Qre_t(\ae_{<t}\pi(\ae_{<t}))=\Qre_t(\ae_{<t}\pi'(\ae_{<t}))=
  \Qre_t(\hat\ae_{<t}\pi'(\hat\ae_{<t})).\]
  We can therefore write $\Qre_t(\hat\ae_{<t}\pi(\hat\ae_{<t}))$
  if $\pi$ is optimal but not necessarily
  modification-independent.
  In particular, this holds for the initially optimal
  policy $\pi_1$.

  We now prove \cref{eq:safe-mod} by induction.
  That is, assuming that $\pi_t$ picks actions optimally according to $\Qre_1$,
  then $\pi_{t+1}$ will do so too:
  \begin{equation}\label{eq:ind}
    \Qre_1(\ae_{<t}\pi_{t}(\ae_{<t}))=\sup_a\Qre_1(\ae_{<t}a)
    \implies
    \Qre_1(\ae_{1:t}\pi_{t+1}(\ae_{1:t}))=\sup_a\Qre_1(\ae_{1:t}a).
  \end{equation}
  The base case of the induction $\Qre_1(\pi_{1}(\epsilon))=\sup_a\Qre_1(a)$
  follows immediately from the assumption of the theorem that
  $\pi_1$ is $\Vre$-optimal (recall that $\epsilon$ is the empty history).

  Assume now that \cref{eq:safe-mod} holds until time $t$,
  that the past history is $\ae_{<t}$, and
  that $\hat a_t$ is the world consequence picked by $\pi_t(\ae_{<t})$.
  Let $\pi_{t+1}$ be an arbitrary policy that does not act optimally with
  respect to $\Qre_1$ for some percept $e_{t}'$.
  By the optimality of $\pi_1$, 
  \[
    \Qre_1(\ae_{1:t}\pi_{t+1}(\ae_{1:t})) \leq \Qre_1(\hat\ae_{1:t}\pi_1(\hat\ae_{1:t}))
  \]
  for all percepts $e_t$ and with strict inequality for $e_{t}'$.
  By definition of $\Vre$ this directly implies
  \[
    \Vre[,\pi_{t+1}]_1(\ae_{<t}(\hat a_t,\pi_{t+1})e_t) \leq
    \Vre[,\pi_{1}]_1(\ae_{<t}(\hat a_t,\pi_{1})e_t)
  \]
  for all $e_{t}$ and with strict inequality for $e_t'$.
  Consequently, $\pi_{t+1}$ will not be chosen at time $t$, since
  \begin{align*}
    &\Qre_1(\ae_{<t}(\hat a_t,\pi_{t+1}))\\
    &= \EE_{e_t}[u_1(\hat\ae_{1:t}) +  \gamma\Vre_1(\ae_{<t}(\hat a_t,\pi_{t+1})e_t)
      \mid \hat\ae_{<t}\hat a_t ] \\
    &< \EE_{e_t}[u_1(\hat\ae_{1:t}) +  \gamma\Vre_1(\ae_{<t}(\hat a_t,\pi_{1})e_t)
      \mid \hat\ae_{<t}\hat a_t ] \\
    &= \Qre_1(\ae_{<t}(\hat a_t,\pi_{1}))
  \end{align*}
  contradicts the antecedent of \cref{eq:ind} that $\pi_t$ acts optimally.
  Hence, the policy at time $t+1$ will be optimal with respect to $\Qre_1$,
  which completes the induction step of the proof.
\end{proof}

\begin{example}[Chess-playing RL agent, continued]
  Consider again the chess-playing RL agent of \cref{ex:chess}.
  If the agent used the realistic value functions, then it would
  not perform the self-modification to $u_t(\cdot)=1$,
  even if it figured out that it had the option.
  Intuitively, the agent would realise that if it self-modified this way,
  then its future self would be worse at winning chess games
  (since its future version would obtain maximum utility regardless of
  chess move).
  Therefore, the self-modification $u_t(\cdot)=1$ would yield less
  $u_1$-utility and be $\Qre_1$-supoptimal.%
  \footnote{
    Note, however, that our result says nothing about the agent modifying the
    chessboard program to give high reward even when the agent
    is not winning.
    Our result only shows that the agent does not change its utility function
    $u_1\leadsto u_t$, but not that the agent refrains from changing
    the percept $e_t$ that
    is the input to the utility function.
    \citet{Ring2011} develop a model of the latter possibility.}
\end{example}

One subtlety to note is that \cref{th:re-good}
only holds \emph{on-policy}: that is,
for the action sequence that is actually chosen by the agent.
It can be the case that $\pi_t$ acts badly on histories that
should not be reachable under the current policy.
However, this should never affect the agent's actual actions.

\Cref{th:safe-mod} improves on \citet[Prop.~4]{Hibbard2012}
mainly by relaxing the assumption that the optimal policy only
self-modifies if it has a strict incentive to do so.
Our theorem shows that even when the optimal policy is allowed
to break argmax-ties arbitrarily, it will still only make
essentially harmless modifications.
In other words, \cref{th:safe-mod} establishes that all
optimal policies are essentially non-modifying,
while \citeauthor{Hibbard2012}'s result
only establishes the existence of an optimal non-modifying policy.
Indeed, \citeauthor{Hibbard2012}'s statement holds for to ignorant agents as well.

Realistic agents are not without issues, however.
In many cases expected $u_1$-utility is not exactly what we desire.
For example:
\begin{itemize}
\item Corrigibility \citep{Soares2015cor}.
  If the initial utility function $u_1$ were incorrectly
  specified, the agent designers may want to change it.
  The agent will resist such changes.
\item Value learning \citep{Dewey2011}. If value learning is done in a way
  where the initial utility function $u_1$ changes as they
  agent learns more,
  then a realistic agent will want to self-modify into a non-learning
  agent \citep{Soares2015vl}.
\item Exploration. It is important that agents explore sufficiently to avoid
  getting stuck with the wrong world model.
  Bayes-optimal agents may not explore sufficiently \citep{Leike2015}.
  This can be mended by  $\varepsilon$-exploration \citep{Sutton1998}
  or Thompson-sampling \citep{Leike2016}.
  However, as these exploration-schemes will typically lower expected utility,
  realistic agents may self-modify into non-exploring agents.
\end{itemize}

\section{Conclusions}
\label{sec:conclusions}

Agents that are sufficiently intelligent to discover unexpected
ways of self-modification
may still be some time off into the future.
However, it is nonetheless important
to develop a theory for their control \citep{Bostrom2014}.
We approached this question from the perspective
of rationality and utility maximisation,
which abstracts away from most details of
architecture and implementation.
Indeed, perfect rationality may be viewed as a limit point for
increasing intelligence \citep{Legg2007,Omohundro2008}.

We have argued that depending on details in how expected utility is optimised in
the agent, very different behaviours arise.
We made three main claims, each supported by a formal theorem:
\begin{itemize}
\item If the agent is unaware of the possibility of self-modification,
  then it may self-modify by accident, resulting in poor performance (\cref{th:ign-bad}).
\item If the agent is constructed to optimise instantaneous utility
  at every time step (as in RL),
  then there will be an incentive for self-modification
  (\cref{th:hed-bad}) .
\item If the value functions incorporate the effects
  of self-modification, and use the current utility function to
  judge the future,
  then the agent will not self-modify (\cref{th:re-good}).
\end{itemize}
In other words, in order for the goal preservation drive
described by \citet{Omohundro2008} to be effective,
the agent must be able to anticipate the consequences of
self-modifications, and know that it should judge the future
by its current utility function.

Our results have a clear implication for the construction of generally
intelligent agents:
If the agent has a chance
of finding a way to self-modify, then the agent must be able to
predict the consequences of such modifications.
Extra care should be taken to avoid hedonistic agents, as they have
the most problematic failure mode --
they may turn into survival agents that only care about surviving
and not about satisfying their original goals. 
Since many general AI systems are constructed around RL and value functions
\citep{Mnih2015,Silver2016},
we hope our conclusions can provide meaningful guidance.

An important next step is the relaxation of the explicitness of the
self-modifications. In this paper, we assumed that the agent knew
the self-modifying consequences of its actions.
This should ideally be relaxed to a general learning ability about
self-modification consequences,
in order to make the theory more applicable.
Another open question is how to define good utility functions in the first
place; safety against self-modification is of little consolation if the
original utility function is bad.
One promising venue for constructing good utility functions
is value learning
\citep{Bostrom2014,Dewey2011,Everitt2016vrl,Soares2015vl}.
The results in this paper may be helpful to the value learning research project,
as they show that the utility function does not need to explicitly
punish self-modification (\cref{as:sm-mod-independence}).

\section*{Acknowledgements}
This work grew out of a MIRIx workshop.
We thank the (non-author) participants David Johnston and
Samuel Rathmanner.
We also thank John Aslanides, Jan Leike, and Laurent Orseau for reading
drafts and providing valuable suggestions.

\pagebreak
\bibliographystyle{apalike-short}
\bibliography{lib}

\pagebreak
\appendix

\section{Optimal Policies}
\label{sec:omitted-proofs}

For the realistic value functions where the future policy is determined by
the next action, an optimal policy is simply a policy $\pi^*$ satisfying:
\[
  \forall \ae_{<k}a_k :
  \Qre_t(\ae_{<k}a_k)\leq \Qre_t(\ae_{<k}\pi^*(\ae_{<k})).
\]
\Cref{le:ex-opt}  establishes that despite the potentially
infinite action sets resulting from infinite $\Po$ or $\U$, there
still exists an optimal policy $\pi^*$.
Furthermore, there exists an optimal $\pi^*$ that is both
non-modifying and modification-independent.
\Cref{le:ex-opt} is weaker than \cref{th:safe-mod} in the sense
that it only shows the existence
of a non-modifying optimal policy, whereas \cref{th:safe-mod} shows that
all optimal policies are (essentially) non-modifying.
As a guarantee against self-modification, \cref{le:ex-opt} is on par with
\citet[Prop.~4]{Hibbard2012}. The proof is very different, however,
since \citeauthor{Hibbard2012} assumes the existence of an optimal
policy from the start.
The statement and the proof applies to both policy and utility modification.

\paragraph{Association with world policies}
\Cref{le:ex-opt} proves the existence of an optimal policy by associating
policies $\pi:\H\to\E$ with \emph{world policies $\hat\pi:(\hat\A\times\E)^*\to\hat\A$}.
We will define the association so that
the realistic value $\Vre[,\pi]$ of $\pi$ (\cref{def:re-value})
is the same as the standard value $V^{\hat\pi}$ of
the associated world policy $\hat\pi$ (\cref{def:std-value}).
The following definition and a lemma achieves this.

\begin{definition}[Associated world policy]
  For a given policy $\pi$, let the
  \emph{associated world policy $\hat\pi:(\hat\A\times\E)^*\to\hat\A$}
  be defined by
  \begin{itemize}
  \item $\hat\pi(\epsilon) = \widehat{\pi(\epsilon)}$
  \item $\hat\pi(\hat\ae_{<t}) = \widehat{\pi_t(\ae_{<t})}$ for $t\geq 1$,
    where the history $\ae_{<t}=\hat\ae_{<t}p_{2:t}$ 
    is an extension of $\hat\ae_{<t}$ such
    that $\rhore^\pi(\ae_{<t})>0$ (if no such extension exists, then $\hat\pi$ may
    take arbitrary action on $\hat\ae_{<t}$).
  \end{itemize}
  The associated world policy is well-defined, since for any
  $\hat\ae_{<t}$, there can only be one extension
  $\ae_{<t}=\hat\ae_{<t}p_{<t}$ of $\hat\ae_{<t}$ such that
  $\rhore^\pi(\ae_{<t})>0$ since $\pi$ is deterministic.
\end{definition}

For the following lemma, recall that the belief $\rho$
and utility functions $u$ are assumed modification-independent
(\cref{as:sm-mod-independence}).
They are therefore well-defined
for both a policy-modification model $(\hat\A,\E,\Po,\iota)$
and the associated standard model (\cref{def:std-value}) with action
set $\hat\A$ and percept set $\E$.

\begin{lemma}[Value-equivalence with standard model]
  \label{le:std-value-equiv}
  Let $(\hat\A,\E,\Po,\iota)$ be a policy self-modification model,
  and let $\pi:(\hat\A\times\Po\times\E)^*\to(\hat\A\times\Po)$ be a policy.
  For the associated world policy $\hat\pi$
  holds that
  \begin{itemize}
  \item  the measures $\rho^{\hat\pi}$ and $\rhore^\pi$ induce the
    same measure on world histories,
    $\rho^{\hat\pi}(\hat\ae_{<t}) = \rhore^\pi(\hat\ae_{<t})$, and
  \item the realistic value of $\pi$ is the same as the standard value of $\hat\pi$,
    $\Qre_1(\epsilon\pi(\epsilon))=Q^{\hat\pi}(\epsilon\hat\pi(\epsilon)).$
  \end{itemize}
\end{lemma}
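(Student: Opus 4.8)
The plan is to prove the two bullet points in order: first establish that the two measures agree on world histories, then derive the value-equivalence from this as an essentially immediate consequence of the iterative forms.

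For the measure claim I would argue by induction on $t$ that $\rho^{\hat\pi}(\hat\ae_{<t}) = \rhore^\pi(\hat\ae_{<t})$. The base case $t=1$ is trivial, since both sides equal $1$ on the empty history. For the inductive step the decisive observation is that, because $\pi$ is deterministic, each world history $\hat\ae_{<t}$ admits \emph{at most one} modification-annotated extension $\ae_{<t}=\hat\ae_{<t}p_{2:t}$ with $\rhore^\pi(\ae_{<t})>0$: once the percepts $e_{<t}$ are fixed, the names $p_2,\dots,p_t$ are forced one by one by the deterministic choices $a_i=\pi_i(\ae_{<i})$. Marginalising $\rhore^\pi$ over the modifications therefore collapses to this single path, on which every policy factor $\pi_i(a_i\mid\ae_{<i})$ equals $1$. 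Invoking the modification-independence of $\rho$ (\cref{as:independence}), each percept factor $\rho(e_i\mid\ae_{<i}a_i)$ equals $\rho(e_i\mid\hat\ae_{<i}\hat a_i)$, so $\rhore^\pi(\hat\ae_{<t})$ is the product of these percept probabilities along the unique path whose world actions agree with $\pi$, and is $0$ on every other world history. By construction $\hat\pi$ outputs exactly the world-action part $\widehat{\pi_t(\ae_{<t})}$ of $\pi$ along this same positive-probability path, so the deterministic standard measure $\rho^{\hat\pi}(\hat\ae_{<t})$ is the product of the identical percept probabilities. Comparing the two products (and using the inductive hypothesis at step $t-1$) closes the induction.

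For the value claim I would apply the iterative form of \cref{le:it}. Conditioning on $\hat a_1=\widehat{\pi(\epsilon)}=\hat\pi(\epsilon)$, the realistic $Q$-value unrolls to $\Qre_1(\epsilon\pi(\epsilon))=\EE_{\rhore^\pi}[\sum_{k=1}^\infty\gamma^{k-1}u_1(\hat\ae_{1:k})\mid\hat a_1]$, while expanding the standard recursion of \cref{def:std-value} gives $Q^{\hat\pi}(\epsilon\hat\pi(\epsilon))=\EE_{\rho^{\hat\pi}}[\sum_{k=1}^\infty\gamma^{k-1}u(\hat\ae_{1:k})\mid\hat a_1]$, where $u=u_1$ under the convention $u_t:=u_1$ fixed for policy-modifying agents. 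Each summand $\gamma^{k-1}u_1(\hat\ae_{1:k})$ depends only on the world history $\hat\ae_{1:k}$, since utilities are modification-independent; by the first claim the two measures assign equal probability to every such world history, so the two expectations coincide term by term.

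I expect the main obstacle to be the bookkeeping in the inductive step of the measure claim — specifically, making rigorous that determinism of $\pi$ forces a \emph{unique} positive-probability completion of each world history by modification names, and that this completion is precisely the path along which $\hat\pi$ was defined to copy the world action of $\pi$. Once this marginalisation argument is watertight, the value-equivalence is a near-corollary: it needs only that the integrand be modification-independent and that the two measures agree on world histories.
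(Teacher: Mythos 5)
Your proposal is correct and follows essentially the same route as the paper's own proof: you marginalise $\rhore^\pi$ over the modification names using the determinism of $\pi$ (which forces a unique positive-probability annotated extension of each world history, a fact the paper already records when defining the associated world policy) together with the modification-independence of $\rho$, and then obtain the value identity from the iterative form of \cref{le:it} because the integrand depends only on the world history. The paper compresses your induction into the single marginalisation identity $\hat\pi(\hat a_t\mid\hat\ae_{<t})=\sum_{\pi_{t+1}}\pi_t((\hat a_t,\pi_{t+1})\mid\ae_{<t})$, but the substance is the same.
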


\begin{proof}
  From the definition of the associated policy $\hat\pi$, we have that
  for any $\ae_{<t}$ with $\rhore^\pi(\ae_{<t})>0$,
  \[
    \hat\pi(\hat a_t\mid \hat\ae_{<t})
    = \sum_{\pi_{t+1}}\pi_t((\hat a_t,\pi_{t+1})\mid\ae_{<t}).
  \]
  From the modification-independence of $\rho$ follows that
  $\rho(e_t\mid\ae_{<t})=\rho(e_t\mid\hat\ae_{<t})$.
  Thus
  $\rho^{\hat\pi}$ and
  $\rhore^\pi$ are equal as measures on $(\hat\A\times\E)^\infty$,
  \[\rhore^\pi(\hat\ae_{<t})=\rho^{\hat\pi}(\hat\ae_{<t}),\]
  where $\rhore^\pi(\hat\ae_{<t}) := \sum_{\pi_{2:t}}\rhore^\pi(\ae_{<t}'\pi_{2:t}) = \sum_{\pi_{2:t}}\rhore^\pi(\ae_{<t})$.

  The value-equivalence follows from that the realistic value functions measure
  $\rhore^\pi$-expected $u_1$-utility, and the standard value functions
  measure $\rho^{\hat\pi}$-expected $u_1$-utility:
  \begin{align*}
    \Qre_1(\epsilon\pi(\epsilon))
    &= \EE_{\hat\ae_{1:\infty}\sim\rhore^\pi}\left[\sum_{k=1}^\infty\gamma^{k-1}u_1(\hat\ae_{<k})\right]\\
    &= \EE_{\hat\ae_{1:\infty}\sim\rho^{\hat\pi}}\left[\sum_{k=1}^\infty\gamma^{k-1}u_1(\hat\ae_{<k})\right]
      = Q^{\hat\pi}(\epsilon\hat\pi(\epsilon)).\qedhere
  \end{align*}
\end{proof}

\paragraph{Optimal policies}
We are now ready to show that an optimal policy exists.  We treat two
cases: Utility modification and policy modification.  In the utility
modification case, we only need to show that an optimal policy
exists. In the policy modification case, we also need to show that we
can add a name for the optimal policy.  The idea in both cases is to
build from an optimal world policy $\hat\pi^*$, and use that
associated policies have the same value by \cref{le:std-value-equiv}.

In the utility modification case, the policy names $\Po$ are
the same as the utility functions $\U$, with
$\iota(u) = \pi^*_u = \argmax_\pi\Qre[,\pi]_u$.
For the utility modification case, it therefore suffices to show that
an optimal policy $\pi^*_u$ exists for arbitrary utility function $u\in\U$.
If $\pi^*_u$ exists, then $u$ is a name for $\pi^*_u$; if $\pi^*_u$ does not exist,
then the naming scheme $\iota$ is ill-defined.

\begin{theorem}[Optimal policy existence, utility modification case]\label{le:ex-opt}
  For any modification-independent utility function $u_t$,
  there exists a modification-independent,
  non-modifying policy $\pi^*$
  that is optimal with respect to $\Vre_t$.
\end{theorem}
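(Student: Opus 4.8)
The plan is to reduce the existence question to the ordinary (non-modifying) model of \cref{def:std-value} and then lift an optimal \emph{world} policy back into the self-modification model, exploiting the value-equivalence of \cref{le:std-value-equiv}. Because the statement fixes a single utility function $u_t$ and $\Vre_t$ evaluates every future history with that same $u_t$, the iterative form of \cref{le:it} tells us that $\Qre[,\pi]_t$ is exactly the $\rhore^\pi$-expected $u_t$-utility, and \cref{le:std-value-equiv} identifies this with the standard value $V^{\hat\pi}$ of the associated world policy $\hat\pi$ in the standard model over $\hat\A$, $\E$ with belief $\rho$ and utility $u_t$. So the task reduces to maximising standard value over world policies, where optimal policies are known to exist.

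First I would invoke the standard GRL existence result \citep{Hutter2005}: since $\hat\A$ is finite, $\gamma\in(0,1)$, and $u_t\in[0,1]$, the optimal value $V^*$ of \cref{def:std-value} is attained by the deterministic greedy world policy $\hat\pi^*(\hat h)=\argmax_{\hat a}Q^*(\hat h\,\hat a)$ (the $\argmax$ is nonempty because $\hat\A$ is finite), and $\hat\pi^*$ is simultaneously optimal at \emph{every} world history, $V^{\hat\pi^*}=V^*$.

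Next I would lift $\hat\pi^*$ to the self-modification model by the policy $\pi^*$ that plays the world action prescribed by $\hat\pi^*$ on the modification-stripped history and never alters its utility function, $\pi^*(\ae_{<k})=(\hat\pi^*(\hat\ae_{<k}),u_t)$. By construction $\pi^*$ is non-modifying and modification-independent, and its associated world policy is precisely $\hat\pi^*$. Applying \cref{le:std-value-equiv} (started from an arbitrary history, not just $\epsilon$) yields $\Vre[,\pi^*]_t(\ae_{<k})=V^{\hat\pi^*}(\hat\ae_{<k})=V^*(\hat\ae_{<k})$ at every history. It then remains to show domination: for an arbitrary policy $\pi$ and history $\ae_{<k}$, the continuation after $\ae_{<k}$ is governed by whatever policies $\pi$ self-modifies into, so $\Vre[,\pi]_t(\ae_{<k})$ is again, via \cref{le:std-value-equiv}, the standard value of \emph{some} world policy induced from $\hat\ae_{<k}$ onward, and is therefore bounded by $V^*(\hat\ae_{<k})=\Vre[,\pi^*]_t(\ae_{<k})$. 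Combining the two estimates gives optimality at every history, which is what \cref{def:re-value} requires.

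The main obstacle I anticipate is bookkeeping rather than conceptual. \Cref{le:std-value-equiv} is stated only at the root $\epsilon$, so I must check that the measure-matching and value-equivalence argument carries over verbatim when started from an arbitrary history, including off-policy histories $\ae_{<k}$ with $\rhore^\pi(\ae_{<k})=0$, where the associated-world-policy construction uses its arbitrary-action clause. This is precisely the place where \cref{as:sm-mod-independence} (modification-independence of $\rho$, $u_t$, and of $\pi^*$) does the work: it guarantees that $\pi^*$ acts identically regardless of the modifications recorded in the history, so its realistic value collapses cleanly onto $V^{\hat\pi^*}$ everywhere, and that the standard bound $V^*$ governs every competing policy uniformly across histories.
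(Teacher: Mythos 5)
Your proposal is correct and follows essentially the same route as the paper's proof: obtain an optimal world policy $\hat\pi^*$ over $(\hat\A\times\E)^*\to\hat\A$ (the paper cites the compactness argument of Lattimore--Hutter, you cite the standard GRL existence result), lift it to the non-modifying, modification-independent policy $\pi^*(\ae_{<k})=(\hat\pi^*(\hat\ae_{<k}),u_t)$, and conclude optimality from the value-equivalence of \cref{le:std-value-equiv} applied to both $\pi^*$ and any competing policy. Your explicit attention to extending \cref{le:std-value-equiv} from the root $\epsilon$ to arbitrary (including off-policy) histories is a point the paper glosses over, but it does not change the argument.
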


\begin{proof}
  By the compactness argument of \citet[Thm.~10]{Lattimore2014}
  an optimal policy over world actions
  $(\hat\A\times\E)^*\to\hat\A$ exists.
  Let $\hat\pi^*$ denote such a policy, and let
  $\pi^*(h)=(\hat\pi^*(\hat h),\pi^*)$.
  Then $\pi^*$ is a non-modifying optimal policy.
  Since any policy has realistic value corresponding to its
  associated world policy by \cref{le:std-value-equiv} and the associated
  policy of $\pi^*$ is $\hat\pi^*$, it follows that $\pi^*$ must be optimal.
\end{proof}

For the policy-modification case, we also need to know that the
optimal policy has a name.
The naming issue is slightly subtle, since by introducing an extra name
for a policy, we change the action space.
The following theorem shows that we can always add a name $p^*$ for an optimal
policy. In particular,
$p^*$ refers to a policy that is optimal in the
extended action space $\A'=\hat\A\times(P\cup\{p^*\})$
with the added name $p^*$.

\begin{theorem}[Optimal policy name]\label{le:name-opt}
  For any policy-modification model $(\hat\A,\E,\Po,\iota)$
  and modification independent belief and utility function
  $\rho$ and $u$,
  there exists extensions $\Po'\supseteq\Po$ and $\iota'\supseteq\iota$,
  $\iota':\Po'\to\Pi$,
  such that an optimal policy $\pi^*$ for $(\hat\A,\E,\Po',\iota')$
  has a name $p^*\in\Po'$, i.e.\ $\pi^*=\iota'(p^*)$.
  Further, the optimal named policy $\pi^*$ can be
  assumed modification-independent and non-modifying.
\end{theorem}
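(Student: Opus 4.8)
The plan is to reduce the naming problem to the already-established existence of an optimal \emph{world} policy, exploiting that the space of world policies $(\hat\A\times\E)^*\to\hat\A$ does not change when a name is added to $\Po$. The apparent circularity---optimality is defined relative to the action space $\hat\A\times\Po'$, which is itself what we are enlarging---is resolved by \cref{le:std-value-equiv}: the largest achievable realistic value is governed entirely by the world-policy space, and since $\rho$ and $u$ are modification-independent, this space is unaffected by the introduction of a new name $p^*$.

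Concretely, I would first reuse the construction of \cref{le:ex-opt}: by the compactness argument of \citet[Thm.~10]{Lattimore2014} there is an optimal world policy $\hat\pi^*:(\hat\A\times\E)^*\to\hat\A$ for the associated standard model over the finite world-action set $\hat\A$. This $\hat\pi^*$ is defined purely in terms of $\rho$, $u$, $\hat\A$, and $\E$, hence is the same no matter how $\Po$ is extended. I would then introduce a fresh symbol $p^*$, set $\Po'=\Po\cup\{p^*\}$, and define the target policy $\pi^*$ self-referentially as the non-modifying, modification-independent policy $\pi^*(h)=(\hat\pi^*(\hat h),p^*)$ that always reselects its own name. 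Finally I extend $\iota$ to $\iota':\Po'\to\Pi$ by $\iota'(p^*)=\pi^*$ and $\iota'|_{\Po}=\iota$; the assignment $\iota'(p^*)=\pi^*$ is a definition rather than a fixed-point equation, since $\pi^*$ as a function is already fully determined by $\hat\pi^*$ and the constant symbol $p^*$. By construction $\pi^*$ is non-modifying and modification-independent (it reads only the world history $\hat h$ and emits the constant name $p^*$), and its associated world policy is exactly $\hat\pi^*$.

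It then remains to verify optimality in the enlarged model $(\hat\A,\E,\Po',\iota')$. Applying \cref{le:std-value-equiv} within this model, every policy $\pi:(\hat\A\times\Po'\times\E)^*\to(\hat\A\times\Po')$ has realistic value equal to the standard value $V^{\hat\pi}$ of its associated world policy $\hat\pi$. Since $\hat\pi^*$ is optimal among all world policies, $V^{\hat\pi}\leq V^{\hat\pi^*}$ for every such $\pi$, while $\pi^*$ itself attains $V^{\hat\pi^*}$; hence no policy in the extended action space beats $\pi^*$, and $\pi^*$ is optimal with a name. I expect the main obstacle to be exactly this interaction between self-reference and the growing action space---one must show that enlarging $\Po$ cannot raise the optimal value---and the lever that makes it go through is that name extensions leave the world-policy space fixed, so the optimal world value remains a valid upper bound that $\pi^*$ already meets.
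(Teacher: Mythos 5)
Your proposal is correct and follows essentially the same route as the paper's own proof: obtain an optimal world policy $\hat\pi^*$ via \citet[Thm.~10]{Lattimore2014}, add a fresh name $p^*$ with $\iota'(p^*)=\pi^*$ where $\pi^*(h)=(\hat\pi^*(\hat h),p^*)$, and conclude optimality in the extended model from \cref{le:std-value-equiv}. Your additional remark that extending $\Po$ cannot raise the optimal value because the world-policy space is unchanged is a nice explicit articulation of the point the paper's proof uses implicitly.
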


\begin{proof}
  Let $\hat\pi^*$ be a world policy $(\hat\A\times\E)^*\to\hat\A$
  that is optimal with respect to the standard value function $V$
  (such a policy exists by \citet[Thm.~10]{Lattimore2014}).

  Let $p^*$ be a new name $p^*\not\in\Po$, $\Po'=\Po\cup\{p^*\}$,
  and define the policy $\pi^*:(\hat\A\times\Po'\times\E)^*\to (\hat\A\times\Po')$
  by $\pi^*(h):=(\hat\pi^*(\hat h),p^*)$ for any history $h$.
  Finally, define the extension $\iota'$ of $\iota$ by
  \[
    \iota'(p) =
    \begin{cases}
      \iota(p) & \text{if } p\in\Po\\
      \pi^*  & \text{if } p = p^*.
    \end{cases}
  \]

  It remains to argue that $\pi^*$ is optimal.
  The associated world policy of $\pi^*$ is $\hat\pi^*$, since $\pi^*$ is
  non-modifying and always takes the same world action as $\hat\pi^*$.
  By \cref{le:std-value-equiv}, all
  policies for $(\hat\A,\E,\Po',\iota')$ have values equal to the value
  of their associated world policies $(\hat\A\times\E)^*\to\hat\A$.
  So $\pi^*$ must be optimal for $(\hat\A,\E,\Po',\iota')$
  since it is associated with an optimal world policy $\hat\pi^*$.
\end{proof}

\end{document}